\newcounter{noteZZctr} \setcounter{noteZZctr}{1}
\newtheorem{theorem}{Theorem}[section]
\titlespacing*{\section}{0pt}{0.1\baselineskip}{0.1\baselineskip}
\titlespacing*{\subsection}{0pt}{0.1\baselineskip}{0.1\baselineskip}
\titlespacing*{\subsubsection}{0pt}{0.1\baselineskip}{0.1\baselineskip}
\title{Transferable Deep Clustering Model}
\author{%
   Zheng Zhang \\
   Department of Computer Science\\
   Emory University \\
   Atlanta, GA 30322, USA \\
   \texttt{zheng.zhang@emory.edu} \\
   \And
   Liang Zhao \\
   Department of Computer Science\\
   Emory University \\
   Atlanta, GA 30322, USA \\
   \texttt{liang.zhao@emory.edu} \\
}
\begin{document}

\maketitle

\begin{abstract}
  Deep learning has shown remarkable success in the field of clustering recently. However, how to transfer a trained clustering model on a source domain to a target domain by leveraging the acquired knowledge to guide the clustering process remains challenging. Existing deep clustering methods often lack generalizability to new domains because they typically learn a group of fixed cluster centroids, which may not be optimal for the new domain distributions. In this paper, we propose a novel transferable deep clustering model that can automatically adapt the cluster centroids according to the distribution of data samples. Rather than learning a fixed set of centroids, our approach introduces a novel attention-based module that can adapt the centroids by measuring their relationship with samples. In addition, we theoretically show that our model is strictly more powerful than some classical clustering algorithms such as k-means or Gaussian Mixture Model (GMM). Experimental results on both synthetic and real-world datasets demonstrate the effectiveness and efficiency of our proposed transfer learning framework, which significantly improves the performance on target domain and reduces the computational cost.
\end{abstract}

\section{Introduction}
Clustering is one of the most fundamental tasks in the field of data mining and machine learning that aims at uncovering the inherent patterns and structures in data, providing valuable insights in diverse applications. In recent years, deep clustering models~\cite{min2018survey, zhou2022comprehensive, ren2022deep} have emerged as a major trend in clustering techniques for complex data due to their superior feature extraction capabilities compared to traditional shallow methods. Generally, a feature extracting encoder such as deep neural networks is first applied to map the input data to an embedding space, then traditional clustering techniques such as $k$-means are applied to the embeddings to facilitate the downstream clustering tasks~\cite{huang2014deep, song2013auto}. There are also several recent works~\cite{xie2016unsupervised, yang2017towards, yang2019deep, yang2016joint, li2021contrastive, zhang2022unsupervised} that integrate the feature learning process and clustering into an end-to-end framework, which yield high performance for large-scale datasets.

While existing deep approaches have achieved notable success on clustering, they primarily focus on training a model to obtain optimal clustering performance on the data from a given domain. When data from a new domain is present, an interesting question is can we leverage the acquired knowledge from the learned model on trained domains to guide the clustering process in new domains. Unfortunately, existing deep clustering models can be hardly transferred from one domain to another. This limitation arises primarily from the fixed centroid-based learning approach employed by these methods. As illustrated in Figure~\ref{fig:intro}, discrepancies often exist between the distributions of the source and target domains. Consequently, the learned fixed centroids may no longer be suitable for the target domain, leading to suboptimal clustering results. However, the process of training a new model from scratch for each domain incurs a substantial computational burden. More importantly, the acquired knowledge pertaining to the intra- and inter-clusters structure and patterns remains underutilized, impeding its potential to guide the clustering process on new data from similar domains. These limitations significantly hinder the practicability of deep clustering methods. 

\begin{figure}[t]
\centering
        \includegraphics[width=0.99\textwidth]{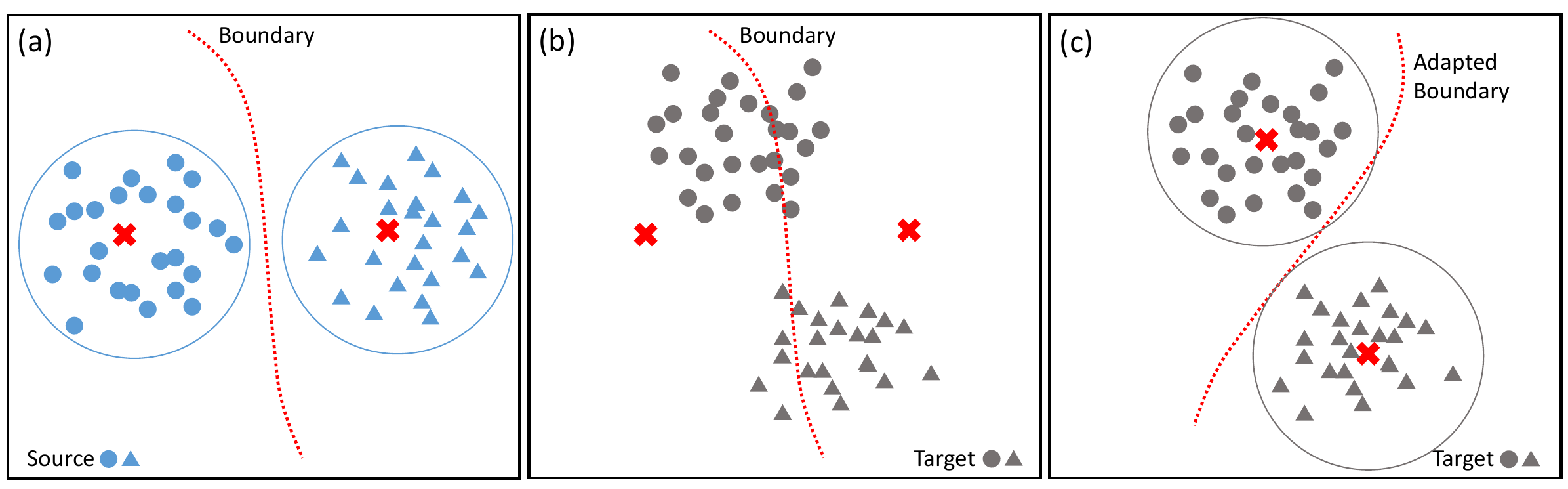}
        \caption{Problem of interest. (a) The cluster centroids learned from source domain can perfectly cluster source data samples; (b) The fixed centroids are not reliable to cluster target samples due to the distribution shift between source samples and target samples; (c) The cluster centroids are adapted to optimal position to better cluster the target samples. \textit{Best viewed in color.}}
        \vspace{-4mm}
        \label{fig:intro} 
\end{figure}

To address these limitations, there is a need for transferable deep clustering models that can leverage acquired knowledge from trained domains to guide clustering in new domains. By transferring the underlying principles of clustering on trained source domains, the model could learn how to cluster better and adapt such knowledge to clustering new data in the target domains. Unfortunately, there exists no trivial way to directly generalize existing deep clustering methods due to several major challenges:
(1) \textbf{Difficulty in unsupervisely learning the shared knowledge among different domains.} In clustering scenarios, where labeled data is unavailable, extracting meaningful and transferable knowledge that capture the commonalities of underlying cluster structures across domains is challenging. 
(2) \textbf{Difficulty in ensuring the learned knowledge can be adapted and customized to target domains.} As shown in Figure~\ref{fig:intro}(b), the distribution discrepancies between source and target domains can significantly harm the clustering performance of existing deep clustering models. Adapting the shared knowledge to new domains remains a challenging task in order to mitigate the negative impact of these distribution discrepancies.
(3) \textbf{Difficulty in theoretically ensuring a stable learning process of clustering module.} Unlike supervised learning tasks, clustering models lack labeled data to provide guidance during training, making it even more crucial to establish theoretical guarantees for stability. Addressing this challenge requires developing theoretical frameworks that can provide insights into the stability and convergence properties of clustering algorithms.

In order to address the above metioned challenges, in this paper we propose a novel method named \textbf{T}ransferable \textbf{D}eep \textbf{C}lustering \textbf{M}odel (TDCM). To address the first challenge, we introduce an end-to-end learning framework that can jointly optimize the feature extraction encoder and a learnable clustering module. This framework aims to leverage the learned model parameter to capture the shared intra-cluster and inter-cluster structure derived from trained cluster patterns. Therefore, the shared knowledge can be effectively transferred to unseen data from new domains. To solve the second challenge, in stead of optimizing a fixed set of centroids, a novel learnable attention-based module is proposed for the clustering process to automatically adapt centroids to the new domains, as illustrated in the Figure~\ref{fig:intro}(c). Therefore, the learned clustering model is not limited to the trained source domains and can be easily generalized to other domains. Specifically, this module enables the updating of centroids through a cluster-driven bi-partite attention block, allowing the model to be aware of the similarity relationships among data samples and capture the underlying structures and patterns. Furthermore, we provide theoretical evidence to demonstrate the strong expressive power of the proposed attention-based module in representing the relationships among data samples. Our theoretical analysis reveals that traditional centroid-based clustering models like $k$-means or GMM can be considered as special cases of our model. This theoretical proof highlights the enhanced capabilities of our approach compared to traditional clustering methods, emphasizing its potential for mining complex cluster patterns from data. Finally, we demonstrate the effectiveness of our proposed framework on both synthetic and real-world datasets. The experimental results show that our method can achieve strongly competitive clustering performance on unseen data by a single forward pass.

\section{Related Works}
\textbf{Deep clustering models.}
Existing deep clustering methods can be classified into two main categories: separately and jointly optimization. The separately optimization methods typically first train a feature extractor by self-supervised task such as deep autoencoder models, then traditional clustering methods such as $k$-means~\cite{huang2014deep}, GMM~\cite{yang2019deep} or spectral clustering~\cite{affeldt2020spectral} are applied to obtain the clustering results. There are also some works~\cite{rodriguez2014clustering} using density-based clustering algorithm such as DBSCAN~\cite{ren2020deep} to avoid an explicit choice of number of centroids. However, the separately methods require a two-step optimization which lack the ability to train the model in an end-to-end manner to learning representation that is more suitable for clustering. On the contrary, the jointly methods are becoming more popular in the era of deep learning. One prominent approach is the Deep Embedded Clustering (DEC) model~\cite{xie2016unsupervised}, which leverages an autoencoder network to map data to a lower-dimensional representations and then optimize the clustering loss KL-divergence between the soft assignments of data to centroids and an adjusted target distribution with concentrated cluster assignments. Deep clustering model (DCN)~\cite{yang2017towards} jointly optmize the dimensionality reduction and $k$-means clustering objective functions via learning a deep autoencoder and a set of $k$-means centroids in the embedding space. JULE~\cite{yang2016joint} formulates the joint learning in a recurrent framework, which incorporates agglomerative clustering technique as a forward pass in neural networks. More recently, some works~\cite{li2021contrastive,tao2021idfd, niu2022spice} also propose to use contrastive learning by data augmentation techniques to obtain more discriminative representations for downstream clustering tasks.

However, most existing deep clustering methods focus on optimizing a fixed set of centroids, which limits their transferability as they struggle to handle distribution drift between different source and target domains. In contrast, our proposed model takes a different approach by adapting the centroids to learned latent embeddings, allowing it to be aware of distribution drift between domains and enhance its transferability.

\textbf{Attention models.} Attention models~\cite{bahdanau2014neural, vaswani2017attention, han2022survey} have gained significant attention in the field of deep learning, revolutionizing various tasks across natural language processing, computer vision, and sequence modeling. These works collectively demonstrate the versatility and effectiveness of attention models in capturing informative relationships between data samples.

\textbf{Deep metric learning.} Our method is also related to deep metric learning methods that aim to learn representations from high-dimensional data in such a way that the similarity or dissimilarity between samples can be accurately measured. One prominent approach is the Contrastive Loss~\cite{hadsell2006dimensionality}, which encourages similar samples to have smaller distances in the embedding space. Siamese networks~\cite{chopra2005learning} learns embeddings by comparing pairs of samples and optimizing the contrastive loss. More recently, the Angular Loss~\cite{wang2017deep} incorporates angular margins to enhance the discriminative power of the learned embeddings. Proxy-NCA~\cite{movshovitz2017no} employs proxy vectors to approximate the intra-class variations, enabling large-scale metric learning. 

\textbf{Connection with Unsupervised Domain Adaption (UDA) methods.} While both our work and existing Unsupervised Domain Adaptation (UDA) methods~\cite{ganin2015unsupervised, long2016unsupervised, liang2020we} involve transferring models from source domains to target domains, the primary goal of our paper differs significantly from UDA tasks. UDA methods assume the presence of labeled data in the source domains, allowing the model to be trained in a supervised manner. In contrast, our paper focuses on a scenario where no labels are available in the source domain, necessitating the use of unsupervised learning techniques. This key distinction highlights the unique challenges and approaches we address in our research.
\section{Preliminaries}
In this section, we first formally define the problem formulation of transferable clustering task and then present the key challenges involved in designing an effective transferable deep clustering model.

In our study, we focus on a collection of datasets denoted as $\mathcal{D}=\{D_1, D_2, \dots, D_m\}$. Each dataset $D_j$ is sampled from a joint probability distribution $p(\mathcal{D})$. Within each sampled dataset $D_j$, we have a set of high-dimensional feature vectors denoted as $D_j = \{\mathbf{x}_i^j\}_{i=1}^{N_j}$, where $\mathbf{x}_i^j$ represents the feature vector for the $i$-th sample. Our objective is to learn shared knowledge in clustering from a subset of datasets, referred to as the training set $D_s$ (source), and utilize this acquired knowledge to predict the clustering patterns on newly sampled unseen datasets, serving as the test set $D_t$ (target).

To achieve this, we aim to learn a clustering model denoted as $f$, trained on the source datasets $D_s$. The model $f$ partitions each source dataset $\{\mathbf{x}_i^s\}_{i=1}^{N_s}$ into $K$ clusters in an unsupervised manner, where $K$ is the desired number of clusters. Our goal is to maximize the intra-cluster similarities and minimize the inter-cluster similarities by learning the clustering rule from the training datasets. Subsequently, we evaluate the clustering performance of the learned function $f$ on the test target sets $D_t$. By leveraging the knowledge acquired during training, we aim to accurately predict the cluster patterns in the test datasets.


\section{Methodology}

\begin{figure}[t]
\centering
        \vspace{-1mm}
        \includegraphics[width=0.85\textwidth]{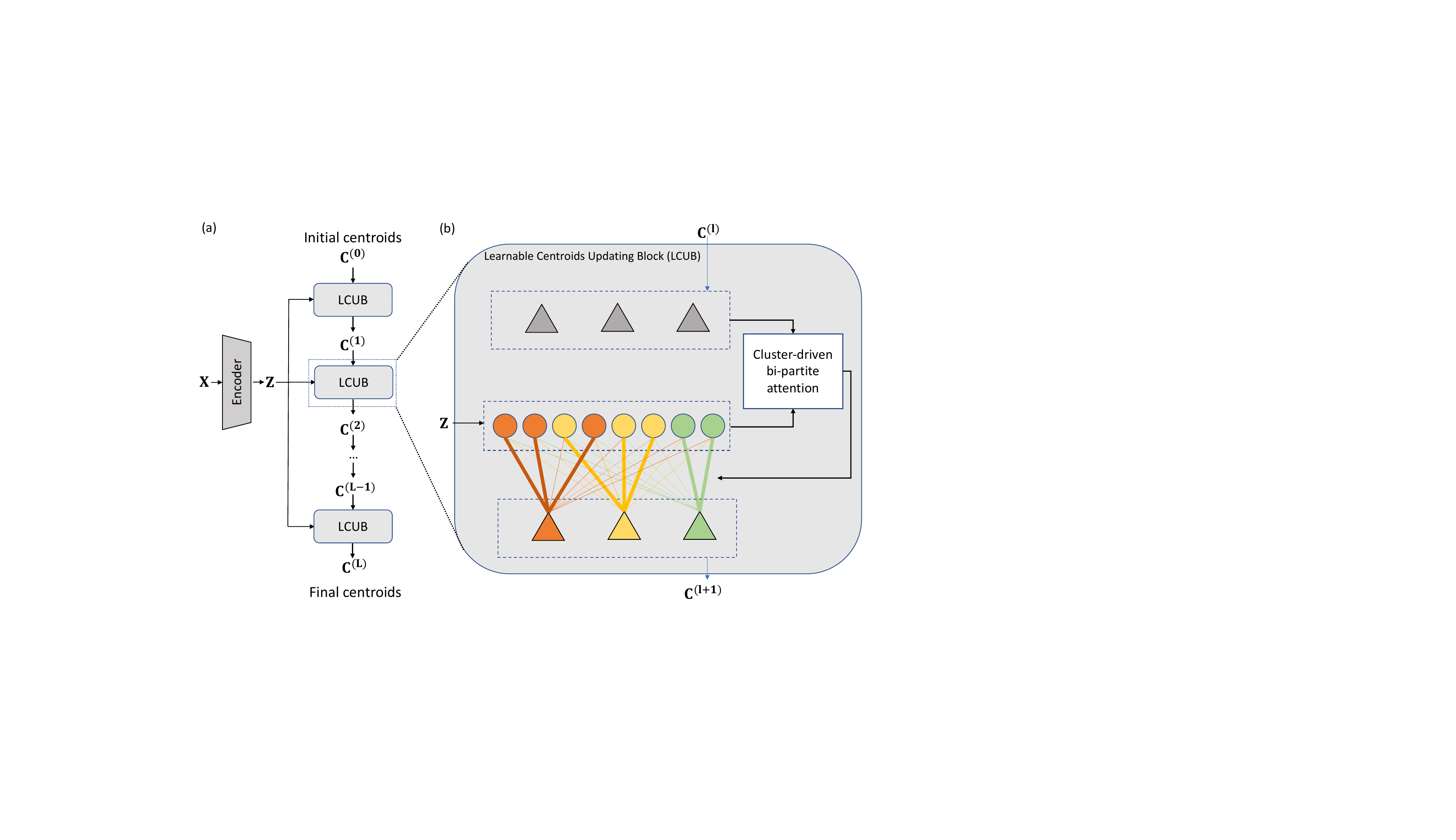}
        \vspace{-4.5mm}
        \caption{(a) The overall framework of the proposed approach. An encoder is first applied to extract latent embeddings $\mathbf{Z}$ from input samples $\mathbf{X}$. Then the initial centroids will forward pass a series of Learnable Centroids Updating Block (LCUB) to learn the underlying similarities with centroids to reveal the cluster patterns; (b) The detailed architecture of LCUB. The current cetroids $\mathbf{C}^{(l)}$ and latent sample embeddings $\mathbf{Z}$ are forwarded to form a bi-partitle graph to calculate the assignment weights by pairwise attention scores, then the centroids are updated by the computed assignment weights. }
        \label{fig:framework} 
        \vspace{-5mm}
\end{figure}

To address the aforementioned challenges, we propose a novel method named \textbf{T}ransferable \textbf{D}eep \textbf{C}lustering \textbf{M}odel (TDCM). To ensure the shared clustering knowledge among domains can be learned unsupervisely, we propose an end-to-end learning framework that jointly optimizes the feature extraction encoder and a learnable clustering module, as depicted in Figure~\ref{fig:framework}(a). The framework aims to utilize the learned model parameters to capture the shared intra-cluster and inter-cluster structure derived from trained cluster patterns. Consequently, this enables effective transfer of the shared knowledge to unseen data from new domains. To adjust the learned knowledge to the target domains, in stead of optimizing a fixed set of centroids, a novel learnable attention-based module is proposed to automatically adapt centroids to the new domains, as shown in Figure~\ref{fig:framework}(b). Therefore, the learned clustering model is not restricted to the trained source domains and can be easily generalized to other domains. Specifically, this module integrates a cluster-driven bi-partite attention block to update centroids, considering the similarity relationships among data samples and capturing underlying structures and patterns. Furthermore, we provide theoretical evidence to demonstrate the strong expressive power of the proposed attention-based module in representing the relationships among data samples. Our theoretical analysis reveals that traditional centroid-based clustering models like $k$-means or GMM can be considered as special cases of our model. This theoretical proof highlights the enhanced capabilities of our approach compared to traditional clustering methods, emphasizing its potential for mining complex cluster patterns from data.

\subsection{Transferrable cluster centroids learning framework}
As previously discussed, existing deep clustering models typically treat centroids as fixed learnable parameters, which limits their ability to generalize effectively to unseen data.
To address this limitation, we propose a novel clustering framework that can dynamically adjust the centroids based on the extracted sample embeddings. 
Consequently, the centroids are dynamically adapted based on the distribution of sample embeddings, endowing the model with the capability to effectively transfer to new domains. As depicted in Figure~\ref{fig:framework}(a), an encoder $g$ is first utilized to extract latent embeddings $\mathbf{Z}=g_{\phi}(\mathbf{X};\phi)$. Then the adaption process involves forward pass on a series of centroids updating blocks: $\{\mathbf{c}_j^{(0)}\}_{j=1}^{K}\rightarrow\{\mathbf{c}_j^{(1)}\}_{j=1}^{K}\rightarrow \dots \{\mathbf{c}_j^{(L)}\}_{j=1}^{K}$, where each block consists of two steps: assignment and update. In the assignment step of the $l$-th ($l \in [0,L]$) block, we compute the probability $\delta_{ij}$ that assigns the data sample $\mathbf{z}_i$ to the current cluster centroid $\mathbf{c}_j^{(l)}$ using a score function $\ell(\mathbf{z}_i, \mathbf{c}_j^{(l)})$, which captures the underlying similarity relationships among samples. Subsequently, we update the cluster centroids based on the assigned data points. The updating process can be mathematically formalized as:
\begin{equation}\label{eq:iterative} 
\begin{split}
    \mathbf{c}_j^{(l+1)} & = \frac{1}{\sum_i^N\sum_j^K \delta_{ij}^{(l+1)}}{\sum_{i=1}^N \delta_{ij}^{(l+1)}\mathbf{z}_i}, \\
    \delta_{ij}^{(l+1)} & = \frac{\exp{ (\ell(\mathbf{z}_i, \mathbf{c}_j^{(l)})/\tau)}}{\sum_{j=1}^{K}{\exp{( \ell(\mathbf{z}_i, \mathbf{c}_j^{(l)})/\tau)}}},
\end{split}
\end{equation}
where $\tau$ denotes the temperature hyper-parameter.

\subsection{Learnable centroids updating module}
Given the overall updating procedure described earlier, a key consideration is the choice of the score function $\ell(\mathbf{z}_i, \mathbf{c}_j)$ to capture the similarity relationship between samples and centroids, thereby capturing the underlying cluster structure. Traditionally, a common approach is to use handcrafted score functions like the Euclidean distance $\ell(\mathbf{z}_i, \mathbf{c}_j) = \|\mathbf{z}_i - \mathbf{c}_j\|_2$. However, designing a specific score function requires domain knowledge and lacks generalizability across different domains. 

To address this issue, we propose a learnable score function $\ell(\mathbf{z}_i, \mathbf{c}_j; \mathbf{W})$ by introducing learnable weights $\mathbf{W}$ that automatically capture the relational metrics between samples in a data-driven manner. Notably, the formulation in Equation~\ref{eq:iterative} resembles a bi-partite graph structure of centroids and samples, which is illustrated in Figure~\ref{fig:framework}(b). An attention-like mechanism, which selectively allocates resources based on the relevance of information, can be constructed based on the bi-partite structure.
Since the goal of updating centroids is to gradually push centroids to represent a group of similar samples, ideally the score function $\ell(\mathbf{z}_i, \mathbf{c}_j; \mathbf{W})$ should achieve its maximum value when $\mathbf{z}_i = \mathbf{c}_j$. However, a common design of attention mechanism can not guarantee this property due to the arbitrary choice of learnable parameters $\mathbf{W}$ (see our proof in Appendix Theorem~\ref{theo:contradict}).

To solve this issue from a theoretical perspective, we propose a novel clustering-driven bi-partite attention module with appropriate constraints on the parameters of learnable matrices. Specifically, the score function is designed as $\ell(\mathbf{z}_i, \mathbf{c}_j; \mathbf{W}_Q, \mathbf{W}_K) = - \sigma (\mathbf{W}_Q(\mathbf{z}_i - \mathbf{c}_j^{(l)}) \cdot \mathbf{W}_K (\mathbf{z}_i - \mathbf{c}_j^{(l)}) )/\tau$ with two learnable weight matrices and we rewrite the Equation~\ref{eq:iterative} as:
\begin{equation}\label{eq:constraint} 
\begin{split}
    \delta_{ij}^{(l+1)}  & = \frac{\exp{(- \sigma (\mathbf{W}_Q(\mathbf{z}_i - \mathbf{c}_j^{(l)}) \cdot \mathbf{W}_K (\mathbf{z}_i - \mathbf{c}_j^{(l)}) )/\tau)}}{\sum_{j=1}^{K}{\exp{(- \sigma (\mathbf{W}_Q(\mathbf{z}_i - \mathbf{c}_j^{(l)}) \cdot \mathbf{W}_K(\mathbf{z}_i - \mathbf{c}_j^{(l)}))/\tau)}}}, \\
    \mathbf{W}_Q &= \mathbf{W}_Q^\intercal, \mathbf{W}_K = \mathbf{W}_K^\intercal, 
\end{split}
\end{equation}
where $\mathbf{W}_Q$ and $\mathbf{W}_K$ are two learnable real-symmetic matrices and $\sigma$ is a continuous non-decreasing nonlinear activation function (e.g. ReLU~\cite{nair2010rectified} or LeakyReLU~\cite{maas2013rectifier}).

\begin{theorem}\label{theo:theorem1}
    The score function $\ell(\mathbf{z}_i, \mathbf{c}_j; \mathbf{W}_Q, \mathbf{W}_K)=- \sigma (\mathbf{W}_Q(\mathbf{z}_i - \mathbf{c}_j^{(l)}) \cdot \mathbf{W}_K (\mathbf{z}_i - \mathbf{c}_j^{(l)}) )/\tau $ defined in Equation~\ref{eq:constraint} can guarantee that $\forall \mathbf{z}_i \in \mathbb{R}^b$, we have $\ell(\mathbf{z}_i, \mathbf{c}_j) \leq \ell(\mathbf{c}_j, \mathbf{c}_j)$. 
\end{theorem}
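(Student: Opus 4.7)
The plan is to split the inequality into the two pieces that make it easy: first compute the right-hand side explicitly, then bound the left-hand side from above by that value. Substituting $\mathbf{z}_i = \mathbf{c}_j$ (with $\mathbf{c}_j = \mathbf{c}_j^{(l)}$) makes the difference vector $\mathbf{v} := \mathbf{z}_i - \mathbf{c}_j^{(l)}$ vanish, so both $\mathbf{W}_Q\mathbf{v}$ and $\mathbf{W}_K\mathbf{v}$ are the zero vector, their Euclidean inner product is $0$, and thus $\ell(\mathbf{c}_j, \mathbf{c}_j) = -\sigma(0)/\tau$. Because the activations singled out in the statement (ReLU, LeakyReLU) satisfy $\sigma(0) = 0$, I would record $\ell(\mathbf{c}_j, \mathbf{c}_j) = 0$ as the baseline to beat.

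Next, the task reduces to establishing $-\sigma\bigl(\mathbf{W}_Q \mathbf{v} \cdot \mathbf{W}_K \mathbf{v}\bigr)/\tau \leq 0$, equivalently $\sigma\bigl(\mathbf{W}_Q \mathbf{v} \cdot \mathbf{W}_K \mathbf{v}\bigr) \geq 0$, for every $\mathbf{v} \in \mathbb{R}^b$ (recall $\tau > 0$). For ReLU this is immediate, since $\sigma(x) = \max(0,x) \geq 0$ for all $x$. For a more general non-decreasing $\sigma$ with $\sigma(0) = 0$, monotonicity reduces the problem to showing the argument itself is non-negative, namely $\mathbf{W}_Q \mathbf{v} \cdot \mathbf{W}_K \mathbf{v} \geq 0$. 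Rewriting as a quadratic form and using $\mathbf{W}_Q = \mathbf{W}_Q^\intercal$,
\begin{equation*}
(\mathbf{W}_Q \mathbf{v}) \cdot (\mathbf{W}_K \mathbf{v}) = \mathbf{v}^\intercal \mathbf{W}_Q^\intercal \mathbf{W}_K \mathbf{v} = \mathbf{v}^\intercal \mathbf{W}_Q \mathbf{W}_K \mathbf{v},
\end{equation*}
so non-negativity for all $\mathbf{v}$ is equivalent to the symmetric part $\tfrac{1}{2}(\mathbf{W}_Q \mathbf{W}_K + \mathbf{W}_K \mathbf{W}_Q)$ being positive semi-definite. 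I would either invoke this as the implicit constraint behind the real-symmetric parameterisation, or restrict the argument to ReLU, where the inequality holds unconditionally.

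The hard part is precisely this second step for activations that can take negative values: symmetry of $\mathbf{W}_Q$ and $\mathbf{W}_K$ individually does \emph{not} by itself force the symmetric part of the product to be PSD, as a small counterexample with two diagonal symmetric matrices carrying mismatched signs on their eigenvalues shows. I would therefore structure the write-up so that the clean ReLU case carries the conclusion, and present the general symmetric-weight case by combining monotonicity of $\sigma$ with the PSD regime of $\mathbf{W}_Q \mathbf{W}_K$ (or the natural special case $\mathbf{W}_Q = \mathbf{W}_K$, where the form collapses to $\|\mathbf{W}_Q \mathbf{v}\|_2^2 \geq 0$ by the symmetry of $\mathbf{W}_Q$). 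Putting the two halves together yields $\ell(\mathbf{z}_i, \mathbf{c}_j) \leq 0 = \ell(\mathbf{c}_j, \mathbf{c}_j)$, which is the claim.
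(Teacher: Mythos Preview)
Your approach mirrors the paper's: both set $\mathbf{p}=\mathbf{z}_i-\mathbf{c}_j^{(l)}$, rewrite the inner product as the quadratic form $\mathbf{p}^\intercal(\mathbf{W}_Q^\intercal\mathbf{W}_K)\mathbf{p}$, argue this quantity is non-negative, and then use that $\sigma$ is non-decreasing to carry the inequality through. The difference lies entirely in how that non-negativity is justified. The paper's proof simply asserts that because $\mathbf{W}_Q$ and $\mathbf{W}_K$ are real-symmetric, the product $\mathbf{W}_Q^\intercal\mathbf{W}_K$ is positive-definite, and concludes from there. You, by contrast, correctly flag that symmetry of the two factors does \emph{not} by itself force the product (or its symmetric part) to be PSD---your diagonal-with-mixed-signs counterexample is exactly the right objection---and you patch the gap either by restricting to ReLU (where $\sigma\geq 0$ holds regardless of the sign of the argument) or by explicitly invoking a PSD regime such as $\mathbf{W}_Q=\mathbf{W}_K$.

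So your write-up is not merely aligned with the paper's proof but is more careful at the one step where the paper is loose. The claim is unconditionally clean for ReLU, while for a generic non-decreasing $\sigma$ it needs the additional positive-semidefiniteness of (the symmetric part of) $\mathbf{W}_Q\mathbf{W}_K$---a hypothesis the paper states as a consequence of symmetry rather than as an assumption. Your structuring of the argument around the ReLU case with the PSD condition as the route for other activations is the right way to present it.
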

\begin{proof}
We first define $\mathbf{p} = \mathbf{z}_i - \mathbf{c}_j^{(l)}$ and rewrite the score function as $\ell = - \sigma (\mathbf{W}_Q\mathbf{p} \cdot \mathbf{W}_K \mathbf{p} )/\tau $. We rewrite the inner product part as 
\begin{equation*}
\begin{split}
    \mathbf{W}_Q\mathbf{p} \cdot \mathbf{W}_K \mathbf{p} = (\mathbf{p}\mathbf{W}_Q)^\intercal \cdot \mathbf{W}_K \mathbf{p} = \mathbf{p}^\intercal (\mathbf{W}_Q^\intercal \mathbf{W}_K) \mathbf{p}.
\end{split}
\end{equation*}
Since $\mathbf{W}_Q$ and $\mathbf{W}_Q$ are two real-symmetic matrices, $\mathbf{W}_Q^\intercal \mathbf{W}_K$ is a positive-definite matrix. For any nonzero real vector $\mathbf{p}$, we have $\mathbf{p}^\intercal (\mathbf{W}_Q^\intercal \mathbf{W}_K) \mathbf{p}>0$. In addition, due to the property of continuous and non-decreasing, the nonlinear activation function would not change the ordering of values. Therefore, for all $\mathbf{z}_i \in \mathbb{R}^b$, we have $\ell(\mathbf{z}_i, \mathbf{c}_j) \leq \ell(\mathbf{c}_j, \mathbf{c}_j)$.
\end{proof}

In addition to theoretical property that our centroids updating module can group similar samples within same clusters, we further prove that our defined score function in Equation~\ref{eq:constraint} can theoretically have stronger expressive power in representing the similarity relationship between data samples than traditional clustering technique such as $k$-means or GMM by the following theorems:
\begin{theorem}\label{theo:kmeans}
    The score function of $k$-means and GMM models are special cases of our defined score function $\ell(\mathbf{z}_i, \mathbf{c}_j; \mathbf{W}_Q, \mathbf{W}_K)$ in Equation~\ref{eq:constraint}.
\end{theorem}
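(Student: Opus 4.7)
The plan is to exhibit explicit choices of the learnable parameters $\mathbf{W}_Q$, $\mathbf{W}_K$, the activation $\sigma$, and the temperature $\tau$ under which the score in Equation~\ref{eq:constraint} collapses to the assignment rule used by $k$-means and by a Gaussian Mixture Model respectively. Since the centroid-update in Equation~\ref{eq:iterative} is already a softmax-weighted mean, it suffices to match the \emph{scores} $\ell$; the assignment probabilities $\delta_{ij}$ and the subsequent centroid update then match automatically. Both reductions exploit the algebraic identity $\mathbf{W}_Q \mathbf{p} \cdot \mathbf{W}_K \mathbf{p} = \mathbf{p}^\intercal (\mathbf{W}_Q^\intercal \mathbf{W}_K) \mathbf{p}$ already derived in the proof of Theorem~\ref{theo:theorem1}, which says that the inner quantity inside $\sigma$ is a quadratic form in $\mathbf{p}=\mathbf{z}_i-\mathbf{c}_j$ governed by $\mathbf{W}_Q^\intercal \mathbf{W}_K$.

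For $k$-means, I would set $\mathbf{W}_Q = \mathbf{W}_K = \mathbf{I}_b$ (trivially real symmetric) and take $\sigma$ to be the identity (equivalently ReLU, since the quadratic form is nonnegative). Then $\ell(\mathbf{z}_i,\mathbf{c}_j) = -\|\mathbf{z}_i - \mathbf{c}_j\|_2^2/\tau$, which is the negative squared Euclidean distance used by $k$-means. Driving $\tau \to 0^+$ makes the softmax in Equation~\ref{eq:constraint} converge to the hard nearest-centroid indicator, and the update in Equation~\ref{eq:iterative} reduces to Lloyd's averaging step. For a GMM with shared positive-definite covariance $\mathbf{\Sigma}$ and uniform mixing weights, the cluster responsibility is proportional to $\exp\bigl(-\tfrac{1}{2}(\mathbf{z}_i-\mathbf{c}_j)^\intercal \mathbf{\Sigma}^{-1}(\mathbf{z}_i-\mathbf{c}_j)\bigr)$. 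Since $\mathbf{\Sigma}^{-1}$ is symmetric positive definite, it has a unique symmetric positive-definite square root; I would set $\mathbf{W}_Q = \mathbf{W}_K = \mathbf{\Sigma}^{-1/2}$ so that $\mathbf{W}_Q^\intercal \mathbf{W}_K = \mathbf{\Sigma}^{-1}$, take $\sigma$ as identity, and pick $\tau = 2$. This produces $\ell(\mathbf{z}_i,\mathbf{c}_j) = -\tfrac{1}{2}(\mathbf{z}_i-\mathbf{c}_j)^\intercal \mathbf{\Sigma}^{-1}(\mathbf{z}_i-\mathbf{c}_j)$, and the softmax in Equation~\ref{eq:constraint} exactly reproduces the GMM posterior; the weighted-mean in Equation~\ref{eq:iterative} then matches the EM M-step for the means.

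The main obstacle is that a fully general GMM allows cluster-specific covariances $\mathbf{\Sigma}_j$ and nonuniform priors, whereas Equation~\ref{eq:constraint} uses a single pair $(\mathbf{W}_Q,\mathbf{W}_K)$ shared across all clusters. The cleanest fix is to state the GMM reduction under the standard tied-covariance assumption (yielding linear decision boundaries), in which case the argument above goes through verbatim; generalizing to per-cluster covariances would require extending the model to cluster-indexed matrices $\mathbf{W}_Q^{(j)},\mathbf{W}_K^{(j)}$, which is outside the scope of the present score function. A secondary subtlety is that the identity activation $\sigma$ is not ReLU, but because $\mathbf{W}_Q^\intercal \mathbf{W}_K$ is positive definite the argument of $\sigma$ is always nonnegative, so ReLU and identity coincide on the relevant domain, and the reduction therefore respects the activation choice made in Equation~\ref{eq:constraint}.
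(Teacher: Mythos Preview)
Your proposal is correct and follows essentially the same route as the paper: set $\sigma$ to the identity, choose $\mathbf{W}_Q=\mathbf{W}_K=\mathbf{I}$ and let $\tau\to 0^+$ for $k$-means, and realize the Mahalanobis quadratic form for GMM via $\mathbf{W}_Q^\intercal\mathbf{W}_K$. Your version is in fact more careful than the paper's---you explicitly respect the symmetry constraint by taking $\mathbf{W}_Q=\mathbf{W}_K=\mathbf{\Sigma}^{-1/2}$, you correctly flag that only the tied-covariance, uniform-prior GMM is recovered, and you note that ReLU agrees with the identity on the nonnegative range of the quadratic form---points the paper leaves implicit or unaddressed.
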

The proof for $k$-means algorithm is straightforward given here and the proof for GMM models can be found in the Appendix.
\begin{proof}
By setting the nonlinear function $\sigma$ as identity function and both $\mathbf{W_Q}$ and $\mathbf{W_K}$ as identity matrix $\mathbf{I}$, we can rewrite the score function as $\ell(\mathbf{z}_i, \mathbf{c}_j)= -\|\mathbf{z}_i - \mathbf{c}_j\|_2^2/\tau$, which is the negative squared Euclidean distance. Then the model is equalize to a soft $k$-means centroids updating step. By setting $\tau \rightarrow 0^+$, the process converges to the traditional $k$-means algorithm.
\end{proof}

\subsection{Unsupervised learning objective function}
In order to optimize the parameters of the proposed model, the overall objective function of our framework can be written as:
\begin{equation}
\begin{split}
    \min_{g_{\phi}, \mathbf{W}_Q, \mathbf{W}_K} \mathcal{L}_{\mathrm{clustering}} + \beta \mathcal{L}_{\mathrm{entropy}}.
\end{split}
\end{equation}
Here the first term $\mathcal{L}_{\mathrm{clustering}}$ is aimed at maximizing the similarity scores within clusters:
\begin{equation}
\begin{split}
    \mathcal{L}_{\mathrm{clustering}} &= -\sum_l^{L} \alpha^{(l)} \sum_i^{N} \sum_j^{K} \delta_{ij}^{(l)} \ell(g_{\phi}(\mathbf{x}_i), \mathbf{c}_j^{(l)}; \mathbf{W}_Q, \mathbf{W}_K),\\
    s.t. \mathbf{W}_Q  \mathbf{W}_Q^\intercal &= \mathbf{I}, \mathbf{W}_K  \mathbf{W}_K^\intercal = \mathbf{I}
\end{split}
\end{equation}
where $\alpha^{(l)}$ are hyperparameters to tune the balance between blocks and orthogonal constraints are incorporated to prevent the trivial solution of scale changes in the embeddings. We can treat the constraints as a Lagrange multiplier and solve an equivalent problem by substituting the constraint to a regularization term.

Besides the clustering loss term, the entropy loss term is aimed at avoiding the trivial solution of assigning all samples to one single cluster:
\begin{equation}
\begin{split}
    \mathcal{L}_{\mathrm{entropy}} &= -\sum_l^{L} \alpha^{(l)} \sum_j^{K} \pi^{(l)}_{j}\log \pi^{(l)}_{j}, \\
    \pi^{(l)}_{j} &= \sum_i \delta_{ij}^{(l)} = \sum_i \frac{\exp{(- \sigma (\mathbf{W}_Q(\mathbf{z}_i - \mathbf{c}_j^{(l-1)}) \cdot \mathbf{W}_K (\mathbf{z}_i - \mathbf{c}_j^{(l-1)}) )/\tau)}}{\sum_{j=1}^{K}{\exp{(- \sigma (\mathbf{W}_Q(\mathbf{z}_i - \mathbf{c}_j^{(l-1)}) \cdot \mathbf{W}_K(\mathbf{z}_i - \mathbf{c}_j^{(l-1)}))/\tau)}}},
\end{split}
\end{equation}
where $\pi^{(l)}_{j}$ reflects the size of each clusters.

\textbf{Initialization of centroids.} Many previous studies use the centroids provided by traditional clusteing methods such as $k$-means on the latent embeddings as the initialization of centroids. However, these methods usually requires to load all data samples into the memory, which can be hardly generalize to a mini-batch version due to the permutation invariance of cluster centroids. To solve this issue, we propose to initialize the centroids $\{\mathbf{c}_j^{(0)}\}_{j=1}^{K}$ before blocks as a set of orthogonal vectors in the embedding space, e.g. identity matrix $\mathbf{I}$.

\subsection{Complexity analysis}
Here we present the complexity analysis of our proposed dynamic centroids update module. In each block, we need to compute the pair-wise scores between centroids and data samples in Equation~\ref{eq:constraint}. Assuming the embedding space dimension is denoted as $b$, the time complexity to calculate the score functions in one block is $O(NKb^2)$. Consequently, performing $L$ blocks would entail a time complexity of $O(LNKb^2)$, where $N$ represents the number of samples and $K$ denotes the number of centroids. It is important to note that our framework naturally supports a mini-batch version, which significantly enhances the scalability of the model and improves its efficiency.
\section{Experiments}
In this section, the experimental settings are introduced first in Section~\ref{exp:settings}, then the performance of the proposed method on synthetic datasets are presented in Section~\ref{exp:synthetic}. We further present the effectiveness test on our method against distributional shift between domains on real-world datasets in Section~\ref{exp:realworld}. In addition, we verify the effectiveness of framework components through ablation studies in Section~\ref{exp:ablation} and measure the parameter sensitivity in Appendix~\ref{exp:sensitivity} due to space limit. 
\subsection{Experimental settings}\label{exp:settings}
\paragraph{Synthetic datasets.} In order to assess the generalization capability of our proposed method towards unseen domain data, we conduct an evaluation using synthetic datasets. A source domain is first generated by sampling $K$ equal-sized data clusters. The data features are sampled from multi-Gaussian distributions with randomized centers and covariance matrices, which is similar to previous works~\cite{karimi2018homophily, chen2018neural}. Subsequently, a corresponding target domain is created by randomly perturbing the centers of the source domain clusters. This ensures the presence of distributional drift between the train and test set data. To provide comprehensive results, we vary the value of $K$ and generate $10$ distinct datasets for each value of $K$. We train the clustering model on source domain and test on the target domain. Our experimental results are reported as an average of 5 runs on each dataset, with different random seeds employed to ensure robustness. 
\paragraph{Real-world datasets.} To further evaluate the generalization capability of our proposed method under real-world senarios, commonly used real-world benchmark datasets are included. (1) \textbf{Digits} which includes MNIST and USPS, is a standard digit recognition benchmark that commonly used by previous studies~\cite{xie2016unsupervised,yang2017towards,long2018conditional, liang2020we}. Follow previous works~\cite{long2018conditional, liang2020we}, we train the model on the source domain training set and test the model on the target domain test set. All input images are resized to $32\times 32$. (2) \textbf{CIFAR-10}~\cite{krizhevsky2009learning} is commonly used image benchmark datasets in evaluating deep clustering models. We treat the training set as source domain and test set as target domain. We introduce CenterCrop to the test set to create distribution drift.

\paragraph{Comparison methods.}
We evaluate the proposed method on both synthetic and real-world benchmark datasets and compare it with both traditional clustering and state-of-the-art deep clustering techniques such as $k$-means, GMM, DAE~\cite{vincent2010stacked}, DAEGMM~\cite{wang2021unsupervised}, DEC~\cite{xie2016unsupervised}, DCN~\cite{yang2017towards}, JULE~\cite{yang2016joint}, CC~\cite{li2021contrastive} and IDFD~\cite{tao2021idfd}. 

\paragraph{Evaluation metrics.}
In our evaluation of clustering performance, we employ widely recognized metrics, namely  normalized mutual information (NMI)~\cite{cai2010locally}, adjusted rand index (ARI)~\cite{yeung2001details}, and clustering accruracy (ACC)~\cite{cai2010locally}. By combining NMI, ARI, and ACC, we can comprehensively demonstrate the quality and efficacy of our clustering results.

\paragraph{Implementation details.} 
Our proposed model serves as a general framework, allowing for the integration of various commonly used deep representation learning techniques as the encoder part. To ensure a fair comparison with previous works, we enforce the use of the same encoder for feature extraction across all models. Specifically, for synthetic data, we utilize a three-layer multilayer perceptron (MLP) as the encoder. For the Digits dataset, we employ the classical LeNet-5 network~\cite{lecun1998gradient} as the encoder. Furthermore, for the CIFAR-10 datasets, we utilize the ResNet-18 network~\cite{he2016deep} as the encoder. We use $L=4$ layers of blocks to update the synthetic datasets and $L=5$ for the real-world datasets. The temperature $\tau$ is set as $1.0$ throughout the whole experiments. We use an linearly increasing series of values for the weights $\alpha$ for penalizing each block in loss function, where the final layer has the largest weight. We train the whole network through back-propagation and utilize Adam~\cite{kingma2014adam} as the optimizer. The initial learning rate is set as $5e^{-3}$ for the synthetic datasets and $5e^{-4}$ for the real-world datasets, and the weight decay rate is set as $5e^{-4}$. The total number of training epochs is $500$ for the synthetic datasets and $2,000$ for the real-world datasets. The batch size is set as 256 for synthetic and DIGITS datasets, and 128 for CIFAR-10 dataset. Data augmentation techniques are added like previous papers~\cite{li2021contrastive,tao2021idfd} for the purpose of training discriminative representations for all the image datasets. The experiments are carried out on NVIDIA A6000 GPUs, which takes around 30 gpu-hours to train the model on CIFAR-10 dataset.

\begin{table}[t]
\caption{Clustering performance on the synthetic datasets with varying number of clusters $K$. The models are trained on source domain and tested on target domain. We denote the performance drop from training set to test set as `diff', where smaller values indicate better generalization ability. The best generalization performances are highlighted in bold.} 
\label{table:synthetic} \vspace{-2mm}
\begin{adjustbox}{width=1.0\columnwidth,center}
\begin{tabular}{c|c|ccc|ccc|ccc|ccc}
\hline
                        &        & \multicolumn{3}{c|}{$K$=2}                           & \multicolumn{3}{c|}{$K$=3}                         & \multicolumn{3}{c|}{$K$=5}                         & \multicolumn{3}{c}{$K$=10}                         \\ \cline{3-14} 
model                   &        & NMI            & ARI             & ACC             & NMI            & ARI            & ACC            & NMI            & ARI            & ACC            & NMI            & ARI            & ACC            \\ \hline
\multirow{3}{*}{$k$-means} & source & 0.995          & 0.998           & 0.999           & 0.951          & 0.930          & 0.940          & 0.898          & 0.845          & 0.855          & 0.924          & 0.850          & 0.845          \\
                        & target & 0.622          & 0.596           & 0.828           & 0.883          & 0.846          & 0.916          & 0.750          & 0.653          & 0.745          & 0.782          & 0.643          & 0.731          \\ \cline{2-14} 
                        & diff   & 0.373          & 0.402           & 0.171           & 0.068          & 0.084          & 0.024          & 0.148          & 0.192          & 0.110          & 0.142          & 0.207          & 0.114          \\ \hline
\multirow{3}{*}{GMM}    & source & 0.995          & 0.998           & 0.999           & 0.991          & 0.995          & 0.998          & 0.934          & 0.919          & 0.936          & 0.953          & 0.919          & 0.924          \\
                        & target & 0.622          & 0.597           & 0.824           & 0.902          & 0.877          & 0.948          & 0.756          & 0.674          & 0.775          & 0.788          & 0.661          & 0.755          \\ \cline{2-14} 
                        & diff   & 0.373          & 0.401           & 0.175           & 0.089          & 0.118          & 0.050          & 0.178          & 0.245          & 0.161          & 0.165          & 0.258          & 0.169          \\ \hline
\multirow{3}{*}{AE}     & source & 0.995          & 0.998           & 0.999           & 0.949          & 0.928          & 0.939          & 0.899          & 0.852          & 0.861          & 0.919          & 0.855          & 0.854          \\
                        & target & 0.532          & 0.516           & 0.778           & 0.754          & 0.746          & 0.824          & 0.642          & 0.635          & 0.701          & 0.702          & 0.613          & 0.712          \\ \cline{2-14} 
                        & diff   & 0.463          & 0.482           & 0.221           & 0.195          & 0.182          & 0.115          & 0.257          & 0.217          & 0.160          & 0.217          & 0.242          & 0.142          \\ \hline
\multirow{3}{*}{DAEGMM} & source & 0.996          & 0.998           & 0.998           & 0.990          & 0.993          & 0.989          & 0.933          & 0.922          & 0.933          & 0.945          & 0.908          & 0.916          \\
                        & target & 0.522          & 0.507           & 0.713           & 0.842          & 0.827          & 0.878          & 0.696          & 0.704          & 0.734          & 0.690          & 0.610          & 0.687          \\ \cline{2-14} 
                        & diff   & 0.474          & 0.491           & 0.285           & 0.148          & 0.166          & 0.111          & 0.237          & 0.218          & 0.199          & 0.255          & 0.298          & 0.229          \\ \hline
\multirow{3}{*}{DEC}    & source & 0.995          & 0.998           & 0.999           & 0.989          & 0.991          & 0.997          & 0.932          & 0.942          & 0.978          & 0.955          & 0.942          & 0.973          \\
                        & target & 0.692          & 0.636           & 0.828           & 0.889          & 0.851          & 0.905          & 0.766          & 0.683          & 0.785          & 0.701          & 0.605          & 0.713          \\ \cline{2-14} 
                        & diff   & 0.303          & 0.362           & 0.171           & 0.100          & 0.140          & 0.092          & 0.166          & 0.259          & 0.193          & 0.254          & 0.337          & 0.260          \\ \hline
\multirow{3}{*}{DCN}    & source & 0.994          & 0.997           & 0.999           & 0.991          & 0.991          & 0.997          & 0.937          & 0.950          & 0.982          & 0.963          & 0.954          & 0.978          \\
                        & target & 0.654          & 0.498           & 0.719           & 0.703          & 0.608          & 0.795          & 0.643          & 0.698          & 0.742          & 0.689          & 0.599          & 0.753          \\ \cline{2-14} 
                        & diff   & 0.340          & 0.499           & 0.280           & 0.288          & 0.383          & 0.202          & 0.294          & 0.252          & 0.240          & 0.274          & 0.355          & 0.225          \\ \hline
\multirow{3}{*}{CC}     & source & 0.990          & 0.992           & 0.998           & 0.980          & 0.981          & 0.998          & 0.938          & 0.950          & 0.981          & 0.962          & 0.963          & 0.982          \\
                        & target & 0.578          & 0.555           & 0.694           & 0.821          & 0.802          & 0.854          & 0.623          & 0.634          & 0.701          & 0.694          & 0.645          & 0.721          \\ \cline{2-14} 
                        & diff   & 0.412          & 0.437           & 0.304           & 0.159          & 0.179          & 0.144          & 0.315          & 0.316          & 0.280          & 0.268          & 0.318          & 0.261          \\ \hline
\multirow{3}{*}{TDCM}   & source & 0.990          & 0.991           & 0.998           & 0.975          & 0.982          & 0.994          & 0.935          & 0.949          & 0.979          & 0.961          & 0.965          & 0.984          \\
                        & target & 0.989          & 0.995           & 0.999           & 0.953          & 0.957          & 0.984          & 0.901          & 0.896          & 0.951          & 0.885          & 0.863          & 0.925          \\ \cline{2-14} 
                        & diff   & \textbf{0.001} & \textbf{-0.004} & \textbf{-0.001} & \textbf{0.022} & \textbf{0.025} & \textbf{0.010} & \textbf{0.034} & \textbf{0.053} & \textbf{0.028} & \textbf{0.076} & \textbf{0.102} & \textbf{0.059} \\ \hline
\end{tabular}
\end{adjustbox}
\vspace{-3mm}
\end{table}

\subsection{Synthetic data results}\label{exp:synthetic}
Table~\ref{table:synthetic} presents the clustering performance on both the trained source domain and test target domain of synthetic datasets. The results show the remarkable effectiveness of our proposed TDCM framework in achieving superior generalization performance when transferring the trained model from source to target sets across all synthetic scenarios. Specifically, TDCM consistently outperforms all the comparison methods, exhibiting an average improvement of 0.215, 0.243, and 0.157 on NMI, ARI, and ACC metrics, respectively. Notably, the performance of the TDCM model on the test set exhibits only a marginal average decrease of 0.033, 0.044, and 0.024 on NMI, ARI, and ACC metrics, respectively, compared to the training set. These results provide strong evidence that our proposed method significantly enhances the transferability of the clustering model, demonstrating its superior performance and robustness. On the other hand, although the comparison methods can achieve competitive performance on the trained training set, their performance drops significantly when transfer from source to target domains, which proves that their fixed set of optimized centroids can not handle the distribution drift between domains.

\subsection{Real-world data results}\label{exp:realworld}
We report the clustering results of the real-world datasets in Table~\ref{table:realworld}. The results demonstrate the strength of our proposed TDCM framework by consistently achieving the best performance when test on test sets across all datasets. Specifically, TDCM consistently outperforms all the comparison methods, exhibiting an average improvement of 0.206,0.342, 0.439 on MNIST, USPS, and CIFAR-10 data test sets, respectively. Our results strongly demonstrate the enhanced transferability of our proposed method for the clustering model, highlighting its superior performance. It worth noting that the improvement of our model on CIFAR-10 dataset is more significant than the other two digits dataset. A possible reason is CIFAR-10 datasets are more complex than the other two datasets, which may prove that our model can handle complex data with high dimensional features.

\begin{table}[t]
\caption{Clustering performance on the real-world datasets. The models are trained on source domain and tested on target domain. The best performances on test datasets are highlighted in bold.} 
\label{table:realworld} 
\vspace{-2mm}
\begin{adjustbox}{width=1.0\columnwidth,center}
\begin{tabular}{c|c|cc|cc|cc|cc|cc|cc|cc|cc|cc}
\hline
                          & model & \multicolumn{2}{c|}{$k$-means} & \multicolumn{2}{c|}{GMM} & \multicolumn{2}{c|}{AE} & \multicolumn{2}{c|}{DEC} & \multicolumn{2}{c|}{DCN} & \multicolumn{2}{c|}{JULE} & \multicolumn{2}{c|}{CC} & \multicolumn{2}{c|}{IDFD} & \multicolumn{2}{c}{TDCM} \\ \hline
                          &       & source       & target       & source      & target     & source     & target     & source      & target     & source      & target     & source      & target      & source     & target     & source      & target      & source  & target         \\ \hline
\multirow{3}{*}{MNIST}    & NMI   & 0.503        & 0.432        & 0.466       & 0.387      & 0.808      & 0.774      & 0.804       & 0.789      & 0.811       & 0.779      & 0.913       & 0.874       & 0.932      & 0.881      & 0.921       & 0.898       & 0.925   & \textbf{0.933} \\
                          & ARI   & 0.476        & 0.382        & 0.398       & 0.343      & 0.765      & 0.698      & 0.789       & 0.764      & 0.768       & 0.730      & 0.874       & 0.849       & 0.873      & 0.864      & 0.88        & 0.841       & 0.873   & \textbf{0.886} \\
                          & ACC   & 0.535        & 0.501        & 0.465       & 0.431      & 0.797      & 0.754      & 0.849       & 0.822      & 0.831       & 0.818      & 0.963       & 0.907       & 0.945      & 0.895      & 0.951       & 0.931       & 0.938   & \textbf{0.958} \\ \hline
\multirow{3}{*}{USPS}     & NMI   & 0.607        & 0.496        & 0.633       & 0.451      & 0.593      & 0.446      & 0.582       & 0.387      & 0.856       & 0.495      & 0.881       & 0.872       & 0.902      & 0.652      & 0.913       & 0.754       & 0.905   & \textbf{0.911} \\
                          & ARI   & 0.597        & 0.503        & 0.623       & 0.402      & 0.549      & 0.389      & 0.565       & 0.234      & 0.834       & 0.324      & 0.858       & 0.840       & 0.889      & 0.613      & 0.901       & 0.746       & 0.885   & \textbf{0.879} \\
                          & ACC   & 0.611        & 0.587        & 0.654       & 0.558      & 0.610      & 0.537      & 0.605       & 0.451      & 0.869       & 0.598      & 0.913       & 0.783       & 0.908      & 0.705      & 0.937       & 0.803       & 0.925   & \textbf{0.910} \\ \hline
\multirow{3}{*}{CIFAR-10} & NMI   & 0.087        & 0.076        & 0.095       & 0.084      & 0.239      & 0.143      & 0.257       & 0.143      & 0.243       & 0.124      & 0.192       & 0.108       & 0.705      & 0.548      & 0.711       & 0.578       & 0.687   & \textbf{0.664} \\
                          & ARI   & 0.049        & 0.033        & 0.062       & 0.049      & 0.169      & 0.114      & 0.161       & 0.094      & 0.143       & 0.079      & 0.138       & 0.087       & 0.637      & 0.421      & 0.663       & 0.467       & 0.642   & \textbf{0.617} \\
                          & ACC   & 0.229        & 0.178        & 0.253       & 0.230      & 0.314      & 0.251      & 0.301       & 0.231      & 0.275       & 0.194      & 0.272       & 0.201       & 0.790      & 0.620      & 0.815       & 0.639       & 0.795   & \textbf{0.773} \\ \hline
\end{tabular}
\end{adjustbox}
\vspace{-2mm}
\end{table}

\begin{table}[t]
\caption{Ablation studies. Values in parentheses indicate standard deviation.} 
\label{table:ablation} 
\begin{adjustbox}{width=1.0\columnwidth,center}
\begin{tabular}{cc|ccc|ccc|ccc}
\hline
                                                 &        & \multicolumn{3}{c|}{Synthetic K=2}                   & \multicolumn{3}{c|}{Synthetic K=5}                   & \multicolumn{3}{c}{CIFAR-10}               \\ \cline{3-11} 
                                                 &        & NMI          & ARI          & ACC          & NMI          & ARI          & ACC          & NMI          & ARI          & ACC          \\ \hline
\multicolumn{1}{c|}{\multirow{2}{*}{Full model}} & source & 0.990(0.018) & 0.991(0.013) & 0.998(0.001) & 0.935(0.001) & 0.949(0.001) & 0.979(0.001) & 0.687(0.045) & 0.642(0.041) & 0.795(0.032) \\
\multicolumn{1}{c|}{}                            & target & 0.989(0.017) & 0.995(0.007) & 0.999(0.002) & 0.901(0.035) & 0.896(0.026) & 0.951(0.013) & 0.664(0.061) & 0.617(0.057) & 0.773(0.043) \\ \hline
\multicolumn{1}{c|}{\multirow{2}{*}{variant-R}}  & source & 0.992(0.016) & 0.991(0.015) & 0.997(0.001) & 0.926(0.014) & 0.935(0.015) & 0.978(0.008) & 0.276(0.172) & 0.240(0.201) & 0.359(0.135) \\
\multicolumn{1}{c|}{}                            & target & 0.678(0.234) & 0.541(0.301) & 0.698(0.197) & 0.754(0.123) & 0.721(0.141) & 0.805(0.067) & 0.178(0.087) & 0.159(0.102) & 0.246(0.155) \\ \hline
\multicolumn{1}{c|}{\multirow{2}{*}{variant-O}}  & source & 0.987(0.023) & 0.976(0.025) & 0.991(0.009) & 0.928(0.015) & 0.935(0.017) & 0.970(0.008) & 0.236(0.092) & 0.205(0.071) & 0.272(0.045) \\
\multicolumn{1}{c|}{}                            & target & 0.878(0.064) & 0.842(0.070) & 0.898(0.045) & 0.851(0.073) & 0.821(0.091) & 0.902(0.027) & 0.148(0.075) & 0.139(0.042) & 0.186(0.055) \\ \hline
\multicolumn{1}{c|}{\multirow{2}{*}{variant-E}}  & source & 0.993(0.012) & 0.991(0.013) & 0.998(0.002) & 0.935(0.001) & 0.949(0.001) & 0.975(0.004) & 0.547(0.120) & 0.492(0.141) & 0.655(0.102) \\
\multicolumn{1}{c|}{}                            & target & 0.969(0.037) & 0.955(0.039) & 0.981(0.012) & 0.891(0.025) & 0.886(0.031) & 0.936(0.020) & 0.564(0.161) & 0.417(0.134) & 0.597(0.143) \\ \hline
\end{tabular}
\end{adjustbox}
\vspace{-2mm}
\end{table}

\subsection{Ablation studies}\label{exp:ablation}
Here we investigate the impact of the proposed components of TDCM. We first consider variants of removing the real-symmetric constraints and orthogonal constriants in our model, named \textit{variant-R} and \textit{variant-O}. In addition, we also remove the entropy loss in our overall loss function, named \textit{variant-E}. We present the results on two synthetic datasets ($K=2,5$) and CIFAR-10 real-world dataset in Table \ref{table:ablation}, where we can observe a significant performance drop consistently for all variants. Especically, we observe that the standard deviation of all variants are larger than the full model, especially for the \textit{variant-R} that removes real-symmetric constraint. Such behavior may demonstrate the importance of these proposed constraints in guaranteeing a stable training process, which is highly consistent with our theoretical analysis.

\subsection{Visualization of centroids updating process}\label{exp:visual}
\vspace{-4mm}
\begin{figure}[h]
\centering
        \includegraphics[width=.9\textwidth]{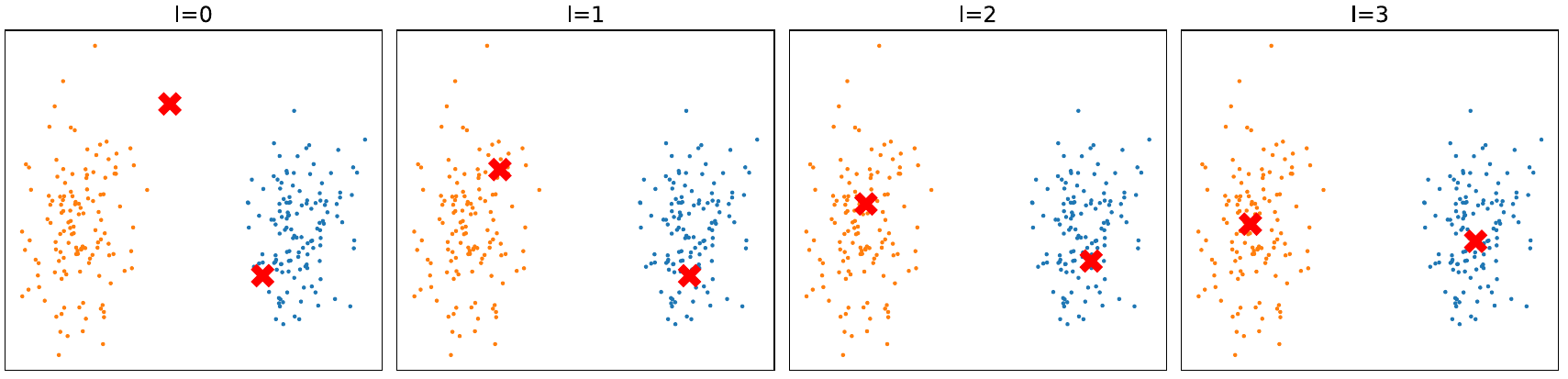}
        \vspace{-3mm}
        \caption{A visualization of centroids updating process with $L=3$ blocks. Here orange and blue points denote two clusters and the red point denotes the centroids.}
        \vspace{-3mm}
        \label{fig:visual} 
\end{figure}
In order to illustrate the learned centroids updating behavior by our designed module, here we visualize the layer-wise updated centroids in $K=2$ synthetic dataset test set in Figure~\ref{fig:visual}. From the visualization we can observe that by forward passing the updating blocks, the centroids are adapted to a more clear cluster structure by the learned similarity metrics. It worth noting that the final adapted centroids are not necessarily at the `center' of the clusters, which demonstrate that the designed module can automatically find the underlying similarity metric between samples that is suitable for the given data.
\section{Conclusions and Limitations}

In this study, we introduce a novel framework called \textbf{T}ransferable \textbf{D}eep \textbf{C}lustering \textbf{M}odel (TDCM) to tackle the challenge of limited generalization ability in previous end-to-end deep clustering techniques when faced with unseen domain data. In stead of optimizing a fixed set of centroids specific to the training source domain, our proposed TDCM employs an adapted centroids updating module, enabling automatic adaptation of centroids based on the input domain data. As a result, our framework exhibits enhanced generalization capabilities to handle unseen domain data. To capture the intrinsic structure and patterns of clusters, we propose an attention-based learnable module, which learns a data-driven score function for measuring the underlying similarity among samples. Theoretical analysis guarantees the effectiveness of our proposed module in extracting underlying similarity relationships, surpassing conventional clustering techniques such as $k$-means or Gaussian Mixture Models (GMM) in terms of expressiveness. Extensive experiments conducted on synthetic and real-world datasets validate the effectiveness of our proposed model in addressing distributional drift during the transfer of clustering knowledge from trained source domains to unseen target domains.

We acknowledge certain limitations associated with our proposed framework. One limitation is that our TDCM is a centroids-based method, similar to previous centroids-based methods like $k$-means, GMM, DEN, DEC, DCN, CC, and IDFD, which necessitates a predefined number of clusters for the model. An inappropriate choice of the number of clusters can adversely affect the performance of the model. Additionally, our adapted updating module requires the updating matrix to be real-symmetric, implying that the hidden dimension of clusters should remain fixed throughout the adaption process. However, as the primary objective of this work is to pioneer deep clustering models for transfer learning across different domains and propose a practical solution to address this problem, the aforementioned challenges certainly go beyond the scope of this study. Nevertheless, we consider these challenges as potential avenues for future research.

\bibliographystyle{plain}
\bibliography{reference}


\appendix

\section{Additional Theorems}

A conventional attention-like updating mechanism can be written as:
\begin{equation}\label{eq:attention} 
\begin{split}
    \delta_{ij}^{(l+1)} & = \frac{\exp{( \sigma (\mathbf{W}_Q\mathbf{z}_i \cdot \mathbf{W}_K \mathbf{c}_j^{(l)})/\tau)}}{\sum_{j=1}^{K}{\exp{( \sigma (\mathbf{W}_Q\mathbf{z}_i \cdot \mathbf{W}_K \mathbf{c}_j^{(l)})/\tau)}}}, 
\end{split}
\end{equation}
where $\mathbf{W}_Q$ and $\mathbf{W}_K$ are two learnable matrices and $\sigma$ is a nonlinear activation function.

Unfortunately, simply adopting the above commonly used attention based score function can not guarantee the similarity relationship between samples can be correclty revealed by the learned score function, which may leads to the failure of capturing the true pattern of clusters. 

\begin{theorem}
\label{theo:contradict}
    For the score function $\ell(\mathbf{z}_i, \mathbf{c}_j)=\sigma (\mathbf{W}_Q\mathbf{z}_i \cdot \mathbf{W}_K \mathbf{c}_j^{(l)})/\tau$ defined in Equation~\ref{eq:attention}, $\forall \mathbf{z}_i \in \mathbb{R}^b$, an arbitrary choice of $\mathbf{W}_Q$ and $\mathbf{W}_K$ can not guarantee $\ell(\mathbf{z}_i, \mathbf{c}_j) \leq \ell(\mathbf{c}_j, \mathbf{c}_j)$.
\end{theorem}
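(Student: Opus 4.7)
The plan is to establish this by counterexample, since the statement is an impossibility claim: to refute the universal guarantee, it suffices to exhibit one choice of $\mathbf{W}_Q, \mathbf{W}_K, \mathbf{c}_j, \mathbf{z}_i$ that violates $\ell(\mathbf{z}_i, \mathbf{c}_j) \le \ell(\mathbf{c}_j, \mathbf{c}_j)$. The structural reason the inequality must fail is that the bilinear form $\mathbf{z}_i^\intercal (\mathbf{W}_Q^\intercal \mathbf{W}_K) \mathbf{c}_j$ is linear in $\mathbf{z}_i$ for fixed $\mathbf{c}_j$, so (unlike a quadratic penalty centered at $\mathbf{c}_j$) it has no finite maximizer in $\mathbf{z}_i$, and in particular $\mathbf{z}_i = \mathbf{c}_j$ is not a maximizer in general.

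First, I would rewrite the inner product as $\mathbf{W}_Q \mathbf{z}_i \cdot \mathbf{W}_K \mathbf{c}_j = \mathbf{z}_i^\intercal \mathbf{M} \mathbf{c}_j$ with $\mathbf{M} := \mathbf{W}_Q^\intercal \mathbf{W}_K$, to isolate the key linearity in $\mathbf{z}_i$. Second, I would collapse to the simplest admissible specialization: take $b = 1$ (or keep $b$ general), choose $\mathbf{W}_Q = \mathbf{W}_K = \mathbf{I}$, pick $\sigma$ to be any continuous non-decreasing nonlinearity used in the paper (ReLU, LeakyReLU, or the identity), and set $\mathbf{c}_j$ to be a fixed nonzero vector, say $\mathbf{c}_j = \mathbf{e}_1$. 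Third, I would choose $\mathbf{z}_i = 2\,\mathbf{c}_j$, so that $\mathbf{z}_i \cdot \mathbf{c}_j = 2\|\mathbf{c}_j\|_2^2 > \|\mathbf{c}_j\|_2^2 = \mathbf{c}_j \cdot \mathbf{c}_j$. Because $\sigma$ is non-decreasing, applying $\sigma/\tau$ preserves the strict inequality, giving $\ell(\mathbf{z}_i, \mathbf{c}_j) > \ell(\mathbf{c}_j, \mathbf{c}_j)$, which directly contradicts the desired property.

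Finally, I would briefly argue that the counterexample is not an artifact of a degenerate choice of parameters: by linearity of the bilinear form in $\mathbf{z}_i$, for any fixed $\mathbf{M}$ with $\mathbf{M}\mathbf{c}_j \neq \mathbf{0}$ one can always scale $\mathbf{z}_i$ along the direction of $\mathbf{M}\mathbf{c}_j$ to make the pre-activation arbitrarily large, so monotonicity of $\sigma$ forces the score to exceed $\ell(\mathbf{c}_j, \mathbf{c}_j)$. This shows that the failure is generic, not parameter-specific, and hence no choice of $\mathbf{W}_Q, \mathbf{W}_K$ restores the guarantee.

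The main (mild) obstacle is purely a matter of care rather than depth: one must ensure the argument is valid for every allowed $\sigma$, including ReLU where saturation at zero could in principle interfere. I would handle this by noting that since $\mathbf{c}_j \cdot \mathbf{c}_j = \|\mathbf{c}_j\|_2^2 > 0$, we lie in the strictly increasing regime of ReLU and LeakyReLU at both arguments being compared, so monotonicity transfers the strict inequality through $\sigma$ without issue. No deeper algebraic machinery is required.
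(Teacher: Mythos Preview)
Your proposal is correct and takes essentially the same approach as the paper: both proofs simply exhibit a concrete counterexample to the inequality. Your choice ($\mathbf{W}_Q=\mathbf{W}_K=\mathbf{I}$, $\mathbf{c}_j=\mathbf{e}_1$, $\mathbf{z}_i=2\mathbf{c}_j$) is in fact simpler than the paper's specific $2\times 2$ matrices, and your added remark about linearity of the bilinear form in $\mathbf{z}_i$ explains \emph{why} a counterexample must exist, which the paper does not discuss.
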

\begin{proof}
Here we give an example by discarding the nonlinear function $\sigma$ and set the temperature $\tau=1$. Let 
\begin{equation*}
    \mathbf{W}_Q = \begin{pmatrix}
    1 & 0\\
    1 & 0
    \end{pmatrix}, \mathbf{W}_K = \begin{pmatrix}
    0 & -1\\
    0 & -1
    \end{pmatrix}, \mathbf{c}_j = \begin{pmatrix}
    0 \\
    -1 
    \end{pmatrix}, 
\end{equation*}
when $\mathbf{z}_i =\mathbf{c}_j = \begin{pmatrix}
    0 \\
    -1 
    \end{pmatrix},$ we have $\ell(\mathbf{z}_i, \mathbf{c}_j)=0$. However, if we set $\mathbf{z}_i =\begin{pmatrix}
    1 \\
    0 
    \end{pmatrix},$ we have $\ell(\mathbf{z}_i, \mathbf{c}_j)=-2$. Therefore, $\ell(\mathbf{z}_i, \mathbf{c}_j) \leq \ell(\mathbf{c}_j, \mathbf{c}_j)$ does not hold $\forall \mathbf{z}_i\in \mathbb{R}^b$.
\end{proof}

In addition, we further give the mathematical proof for the Theorem~\ref{theo:kmeans} that GMM models can be considered as special cases of our method:
\begin{proof}
By setting the nonlinear function $\sigma$ as identity function and the product of $\mathbf{W_Q}$ and $\mathbf{W_K}$ as one single positive-definite matrix $\mathbf{\Sigma} = \mathbf{W_Q}^{\intercal}\mathbf{W_K}$, we can rewrite the score function as $\ell(\mathbf{z}_i, \mathbf{c}_j)= -(\mathbf{z}_i - \mathbf{c}_j)^{\intercal}\mathbf{\Sigma}(\mathbf{z}_i - \mathbf{c}_j)/\tau$. The updating function is equalized to a GMM centroids updating step, while our model retains more expressive power due to the nonlinear activation function and the less-constrained parameters of matrices $\mathbf{W_Q}$ and $\mathbf{W_K}$.
\end{proof}

\section{Additional Experiments}
\subsection{Additional experiments on large size classes dataset}\label{exp:large}
\begin{table}[h!]
    \centering
    \caption{Additional experimental results on COIL100 dataset.}
    \begin{tabular}{l|ccccccccc}
        \toprule
        Domain & $k$-means & GMM & AE & DEC & DCN & JULE & CC & IDFD & Ours \\
        \midrule
        Source & 0.819 & 0.831 & 0.824 & 0.855 & 0.899 & 0.985 & 0.987 & 0.986 & 0.986 \\
        Target & 0.644 & 0.519 & 0.743 & 0.765 & 0.824 & 0.954 & 0.942 & 0.967 & \textbf{0.984} \\
        \bottomrule
    \end{tabular}
    \label{tab:coil100}
\end{table}
We have carried out additional experiments using the COIL100 dataset~\cite{nene1996columbia}, which consists of 100 distinct classes. In these experiments, the first 50 classes are used as the training source domain, while the remaining 50 classes serve as the test target domain. The Normalized Mutual Information (NMI) clustering results for our method and comparative techniques are presented in the Table~\ref{tab:coil100}. We can observe that our methods can outperform all the comparison methods on the target domain with only minor performance drop from source domain (0.002). Although state-of-the-art comparison methods such as JULE, CC and IDFD can achieve comparable results on source domain, they are notably affected by the domain shift between source and target, leading to a noticeable decline in performance.

\begin{figure}[h]
\centering
        \vspace{-1mm}
        \includegraphics[width=1.\columnwidth]{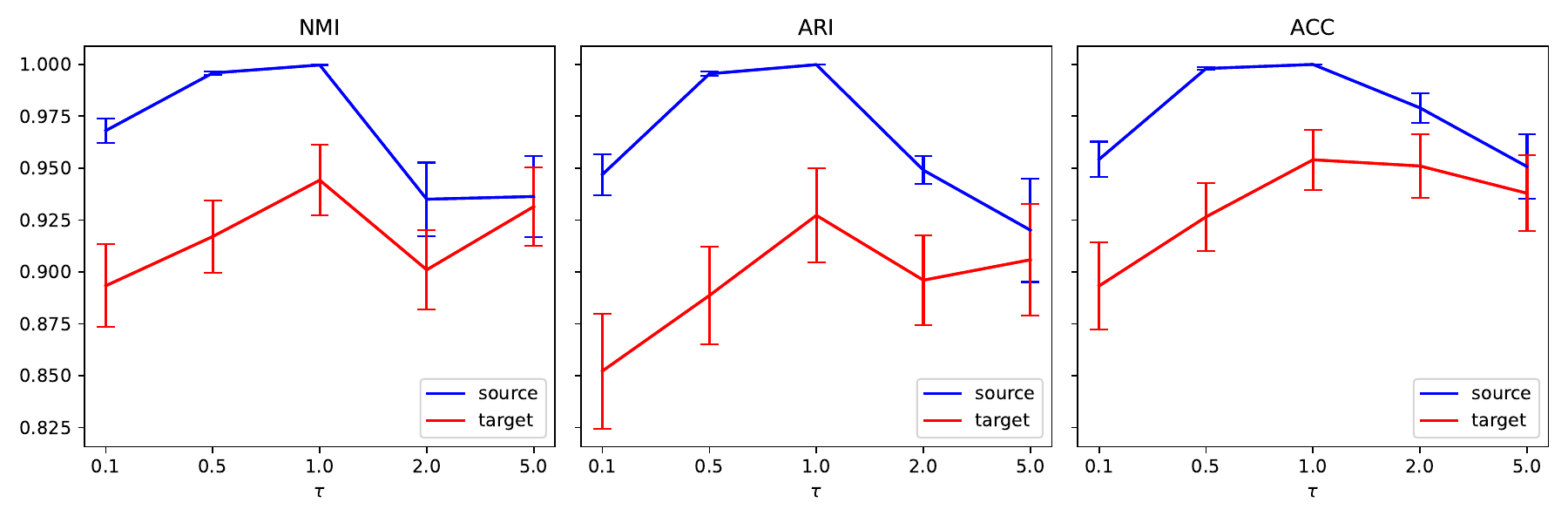}
        \caption{The parameter sensitivity experimental results for the temperature parameter $\tau$ from 0.1 to 5.0. Here we present the results on both trained source domain and test target domain. }
        \label{fig:parameter} 
\end{figure}
\subsection{Sensitivity analysis}\label{exp:sensitivity}
In this section, we include synsitivity analysis on the critical parameters that involved in our designed model, which is the temperature parameter $\tau$ in Equation~\ref{eq:constraint}. In a intuitive manner, the temperature parameter significantly influences the determination of the computed distance metrics between samples. Specifically, an increased value of $\tau$ tends to reduce the disparity between the estimated similarity scores between samples and centroids. Therefore, we vary the value of $\tau$ from 0.1 to 5.0 to examine the sensitivity of choice on temperature. We present the results of sensitivity analysis on $K=5$ synthetic datasets in Figure~\ref{fig:parameter}. From the figure we can observe that the performance of our proposed method usually peaks at the value of $\tau=1.0$ for both training and test set. It also worth noting that our proposed method is robustness since the performance will stably change with the value of temperature parameter $\tau$. Besides, an interesting observation is that when the value of $\tau$ is becoming larger (e.g. 5.0), the gap of performance between source and target domain decreased. A possible explanation is that a larger temperature value can smooth the discrepancy between samples, which results in a better generalizable model.

\begin{table}[t!]
    \centering
    \caption{Additional sensitivity experiments on number of updating blocks $L$, weights of penalization terms on each layers $\alpha$, and $\beta$.}
    \begin{tabular}{l|cccccccccc}
        \toprule
        Original & $L=4$ & $L=2$ & $L=8$ & $L=16$ & $\alpha_{\text{linear}}$ &  $\alpha_{\text{last}}$ &  $\beta=1$ &  $\beta=0$ & $\beta=10$ \\
        \midrule
        Source & 0.990 & 0.975 & 0.984 & 0.912 & 0.990 & 0.984 & 0.990 & 0.955 & 0.101 \\
        Target & 0.989 & 0.974 & 0.987 & 0.910 & 0.989 & 0.988 & 0.989 & 0.972 & 0.098 \\
        \bottomrule
    \end{tabular}
    \label{tab:comparison_parameters}
\end{table}

We also include additional sensitivity analysis on the hyperparameters, e.g. number of updating blocks $L$, weights of penalization terms on each layers $\alpha$, and $\beta$ to balance the clustering loss and entropy loss. Here we use the synthetic dataset ($K=2$) for conducting the performance for the purpose of efficiency. The results are presented in Table

(1) It is noticeable that the selection of hyperparameters can influence the training capacity within the source domain. However, the ability to generalize from the source domain to the target domain is hardly affected to these changes.

(2) We vary the number of updating blocks $L$ from 2 to 16. We can observe from the results that an appropriate choice of number of updating blocks can affect the training performance on the source domain. Choosing either a too small value (such as $L=2$) or a too large one (such as $L=16$) could hinder the model's ability to optimize towards the underlying optimal clustering.

(3) We compare two choices of $\alpha$ to tune the balance of loss between different blocks. We adopt $\alpha_{linear}$, which is to equally penalize the loss on each block, and $\alpha_{last}$, which only penalize on the last block. We observe that the performance only slightly different between these two, which indicates our model is not sensitive to the choice of $\alpha$.

(4) We additionally evaluate the impact of the regularization term through the hyperparameter $\beta$. Our observations indicate that performance can significantly drop if the entropy regularization is eliminated ($\beta=0$) or if the regularization is excessively strong ($\beta=10$). This decline in performance is particularly pronounced in the case where $\beta=10$.

\end{document}